\newtheorem{lemma}{Lemma}
\newtheorem{theorem}{Theorem}
\DeclareMathOperator*{\argmin}{arg\,min}
\begin{document}
%
\title{DFCA: Decentralized Federated Clustering Algorithm}
%
%
%

\author{
    Jonas~Kirch, 
    Sebastian~Becker, 
    Tiago~Koketsu~Rodrigues,~\IEEEmembership{Senior Member,~IEEE,}
    Stefan~Harmeling%
    \thanks{Jonas Kirch is with the Graduate School of Information Science at Tohoku University, Sendai, Japan. Formerly, he was with the Fraunhofer Institute for Software and Systems Engineering, Dortmund, Germany.}%
    \thanks{Sebastian Becker is with the Fraunhofer Institute for Software and Systems Engineering, Dortmund, Germany.}%
    \thanks{Tiago Koketsu Rodrigues is with the Graduate School of Intormation Science at Tohoku University, Sendai, Japan.}%
    \thanks{Stefan Harmeling is with the Department of Computer Science at TU Dortmund University and with the Lamarr Institute for Machine Learning and AI, Dortmund, Germany.}%
}

%
%

\markboth{Preprint. Accepted for Publication in IEEE Internet of Things Journal}%
{Kirch \MakeLowercase{\textit{et al.}}: DFCA: Decentralized Federated Clustering Algorithm}
%



\maketitle

\begin{abstract}
Clustered Federated Learning has emerged as an effective approach for handling heterogeneous data across clients by partitioning them into clusters with similar or identical data distributions. However, most existing methods, including the Iterative Federated Clustering Algorithm (IFCA), rely on a central server to coordinate model updates, typically requiring stable connectivity, synchronous communication rounds, and global aggregation of client models. These assumptions are difficult to satisfy in decentralized and heterogeneous environments, where clients may only have limited, local communication with a small subset of peers. As a result, such methods create a bottleneck and a single point of failure, limiting their applicability in realistic decentralized learning settings. This limitation is particularly severe in Internet of Things settings, where large numbers of resource-constrained devices, intermittent or sparse connectivity, and dynamic participation make reliance on a central server impractical. In this work, we introduce the Decentralized Federated Clustering Algorithm (DFCA), a fully decentralized clustered federated learning algorithm that enables clients to collaboratively train cluster-specific models without central coordination. DFCA uses a sequential running average to aggregate models from neighbors as updates arrive, providing a communication-efficient alternative to batch aggregation while maintaining clustering performance. Our experiments on various datasets demonstrate that DFCA outperforms other decentralized algorithms and performs comparably to centralized IFCA, even under sparse connectivity, highlighting its robustness and practicality for dynamic real-world decentralized networks.
\end{abstract}

\begin{IEEEkeywords}
Machine Learning, Federated Learning, Clustered Learning, Decentralized Optimization
\end{IEEEkeywords}

%
\IEEEpeerreviewmaketitle

\section{Introduction}
\IEEEPARstart{F}{ederated} Learning (FL) has emerged as a new paradigm that allows for clients to train Machine Learning (ML) models collaboratively without the need to share their raw data. By enabling collaborative training across multiple devices, FL has gained significant attention in research and industry, especially since distributed computing with different devices has become a crucial component of modern technology. The most known FL implementation strategy, FedAvg \citep{FL}, and most other known FL algorithms assume a setting with a central instance that aggregates the local updates of all clients to form a global model, which is then broadcast back to the network. While effective, this orchestration with a central server introduces several limitations, including a single point of failure, communication delays, and bottlenecks that are often connected to more challenging learning settings with Internet of Things (IoT) devices and mobile phones \citep{lalitha2018fully}. In such environments, devices are often highly heterogeneous in data or computation, network connectivity is intermittent, and communication links are unreliable. These characteristics amplify the central server bottleneck, making synchronous aggregation inefficient or even infeasible in practical IoT deployments, thereby motivating decentralized approaches.

To address the limitations of centralized Federated Learning (CFL), recent research has explored decentralized Federated Learning (DFL), where clients communicate with each other without the need for a central instance \citep{lalitha2018fully}. Decentralized strategies often utilize peer-to-peer (P2P) \citep{lalitha2019peer} or gossip-based \citep{hu2019decentralizedfederatedlearningsegmented, GOSSIP} exchange methods to achieve convergence through direct communication between clients. DFL approaches remove the single point of failure, often reduce communication cost and delays, and improve the overall robustness \citep{DFL}.

Concurrently, clustered FL has appeared as a proposed solution to data heterogeneity across clients, another major issue in ML and FL. In most real-world scenarios, the data is not independently identically distributed (non-IID) over all clients, making global aggregation less efficient and suboptimal. Clustered FL methods attempt to cluster clients into groups with similar data distributions, allowing clusters to capture local patterns and characteristics during training \citep{sattler2019}. The most popular among clustered FL techniques is the Iterative Federated Clustering Algorithm (IFCA) \citep{IFCA}, which is a centralized, training loss-based clustering method, where clusters of clients are evaluated locally after each global training round. As most other clustered FL techniques that have been developed over the recent years also presume central instance coordination, they are not optimized for decentralized learning settings. In this paper, we propose the Decentralized Federated Clustering Algorithm (DFCA) to address this issue.

\textbf{Our contributions:}
\begin{enumerate}
    \item We formulate DFCA, a fully decentralized federated clustering algorithm inspired by IFCA and designed to operate effectively in low-connectivity networks with heterogeneous client data distributions, often seen in IoT deployments.

    \item We incorporate a sequential running-average parameter exchange strategy that preserves clustering performance while enabling communication-efficient updates across the network.
    
    \item Through extensive experiments on various datasets, we demonstrate that DFCA matches the accuracy of the centralized IFCA baseline and outperforms decentralized alternatives. Furthermore, sequential aggregation achieves performance comparable to synchronous batch aggregation, highlighting its practicality for real-world decentralized settings.


\end{enumerate}

After looking at the related work and problem formulation in Sections II and III, we proceed to introduce our method in Section IV, analyze its convergence in Section V, and show our simulation results in Section VI. We will finally conclude this paper's findings in Section VII.

\section{Related Work}

\subsection{Decentralized Federated Learning}
DFL originated from decentralized Stochastic Gradient Descent (SGD) optimization \citep{lian2018asynchronousdecentralizedparallelstochastic} and was later formulated by Lalitha \textit{et al.} \citep{lalitha2018fully} as a distinct concept for FL. During the following years, researchers proposed new frameworks and concepts around DFL, leading to rapid growth of the field. Research aspects of DFL include network topologies \citep{wang_impactofnetworktopology, neglia_roleofnetworktopologyindistributedml, malandrino2021federatedlearningnetworkedge, marfoq2020throughputoptimaltopologydesigncrosssilo, chellapandi_dflmodelupdatetracking}, communication protocols \citep{sun2021decentralizedfederatedaveraging, lalitha2019peer, GOSSIP, kolskova_decentralizedstochasticoptimizationandgossipalgos, hu2019decentralizedfederatedlearningsegmented, bellet2018personalizedprivatepeertopeermachine} and iteration orders \citep{DFL}. Explicit DFL paradigms \citep{chang_distributeddlformedicalimaging, sheller_multiinstitutionaldelmodeling, sheller_multiinstitutional_2, huang2022continuallearningpeertopeerfederated, yuan2023peertopeerfederatedcontinuallearning, assran2019stochasticgradientpushdistributed, roy2019braintorrentpeertopeerenvironmentdecentralized, pappas2021iplsframeworkdecentralized, shi2021overtheairdecentralizedfederatedlearning, chen2022decentralizedwirelessfederatedlearning, wang_matcha} then put these concepts and assumptions in the context of real-world learning settings.  
However, there still remains a gap in performance between CFL and DFL \citep{sun2024which}, especially in low-connectivity settings and in the presence of heterogeneity, which motivated us to have a closer look into decentralized optimization.

\subsection{Clustered FL}
First being introduced by Sattler \textit{et al.} \citep{sattler2019} in 2019, clustered FL addresses the issue of handling heterogeneous data distributions of clients in a network. To optimize performance and adapt to different learning settings, researchers have introduced different methods to cluster the clients into groups with similar data distributions \cite{ClusterFLsurvey}. After the initial introduction of client-side clustered FL algorithms based on client loss minimization  \cite{sattler2019, IFCA}, recent publications have focused on optimizing this strategy in different learning contexts \citep{HYPCLUSTER, li2022, kim2020}. Voting-scheme-based \citep{gong2024} or k-means-based \citep{Long_2022} methods are alternative solutions utilizing client-side clustering. In contrast to the approaches mentioned above, our algorithm works in decentralized, low-connectivity settings without the need for a central instance. Instead of implicitly evaluating neighboring clients, as in Onoszko et al. \citep{onoszko2021decentralizedfederatedlearningdeep}, DFCA explicitly groups clients according to their data distributions. Lin \textit{et al.} \citep{lin2025fedspd} highlighted the potential of decentralized federated clustering methods when they introduced their decentralized soft-clustering algorithm for scenarios in which clients possess multiple data distributions. However, their method, called FedSPD, addresses a soft clustering scenario where each client may hold data from multiple distributions simultaneously, ignoring hard clustering scenarios where each client only has access to one data distribution.  
This motivated us to develop a decentralized approach capable of matching the performance of centralized IFCA in low-connectivity settings with clients holding different data distributions, as described later in Section III.

\section{Preliminaries}

\begin{figure}
    \centering
    \includegraphics[width=0.7\linewidth]{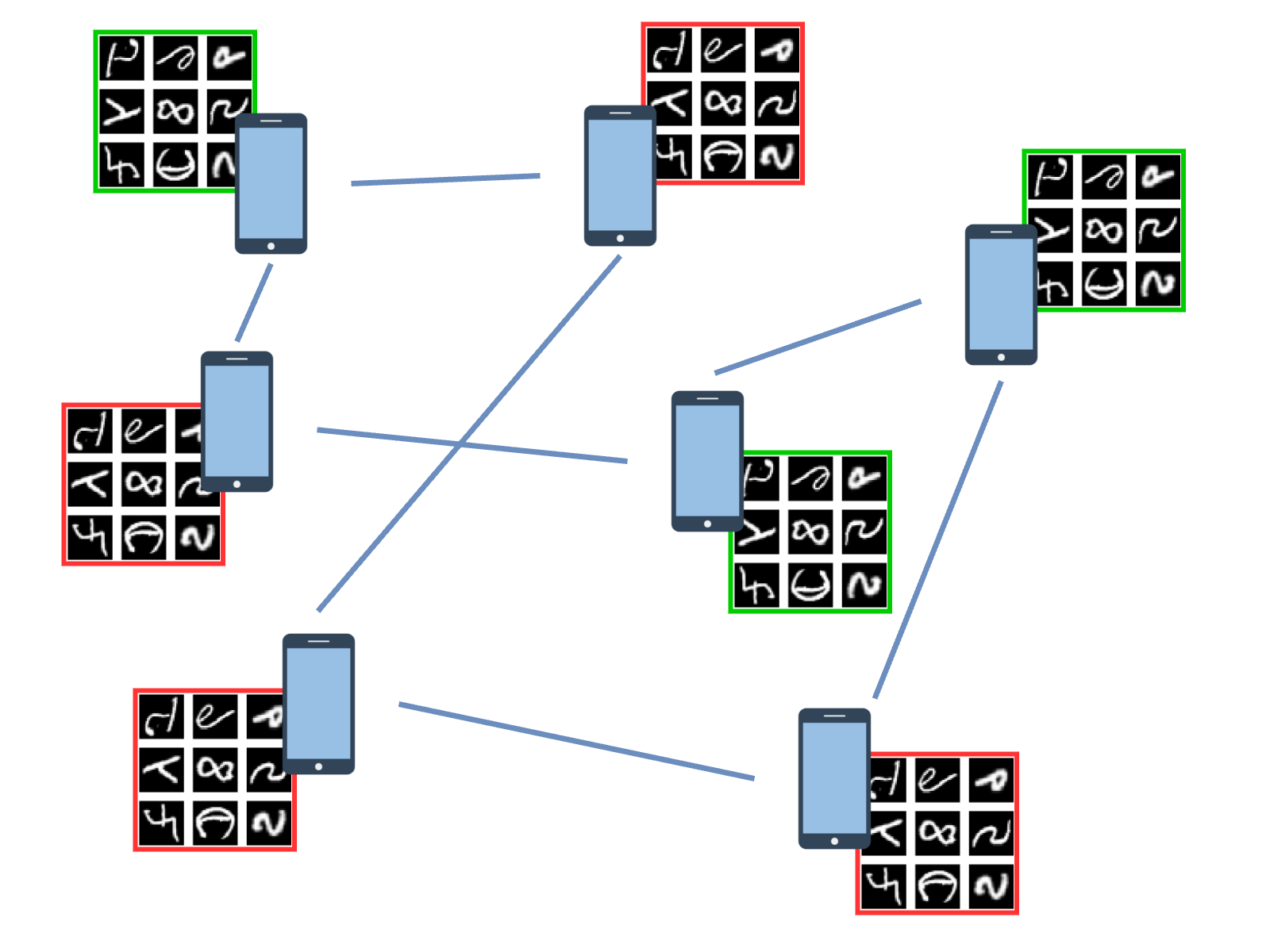}
    \caption{Illustration of the DFCA problem for Rotated EMNIST with two different data distributions}
    \label{fig:problem}
\end{figure}

Let $M$ be a set of $N$ clients that are connected to each other in a graph.  We represent the graph by $N$ sets $\mathcal{N}_i\subset M$, which contain the neighboring clients for each client $i$ (i.e., neighborhood sets). The clients are partitioned into $k$ disjoint clusters $\mathcal{S}_1,...,\mathcal{S}_k\subset M$.  Each cluster is associated with a distinct data distribution $\mathcal{D}_1,...,\mathcal{D}_k$. Our problem setup is illustrated in Figure \ref{fig:problem}, which shows the different data distributions represented by handwritten character digits (EMNIST) rotated by 0, 90, 180, 270 degrees.

For each client $i$, we sample a data set $D_i$ distributed according to $\mathcal{D}_j$ of the associated cluster $j$ meaning that each client has data from one of $k$ data distribution. Additionally, at each client $i$ we store all $k$ Machine Learning models (ML-models), which are parameterized by $\theta_{i,j}$ where $j\in[k]:=\{1,\dots,k\}$ and $k$ is the number of clusters.  Client $i$ will update the parameters $\theta_{i,j}$ of the model, which is associated with its corresponding cluster $j$, by gradient descent using $D_i$.  During aggregation (communication phase) the local models for all clusters are updated using the models of the neighboring clients. Note that the corresponding (assigned) cluster of a client might change after an iteration.

For client-local learning, we consider a loss function $\mathcal{L}(\theta_{i,j}, d)$ that calculates the loss for a single data point $d\in D_i$. These losses can be combined on the client-level and also on the cluster-level:

Let's first consider the loss for an individual client.  We assume that client $i$ is assigned to cluster $j$. Then we write the client-specific objective as 
\begin{equation}
    F_\text{client}(\theta_{i,j},D_i)=\frac{1}{|D_i|}\sum_{d\in D_i} \mathcal{L}(\theta_{i,j},d).
\end{equation} 
Second, we define the loss for each cluster $j$ as the sum of the losses of the associated clients, 
\begin{equation}
    F_\text{cluster}(j)=\sum_{i\in\mathcal{S}_j}F_\text{client}(\theta_{i,j}, D_i).
\end{equation}
Finally, we define the global loss, which combines all data points across all clients into a single number:
\begin{equation}
    F_\text{global}=\sum_{j=1}^k F_\text{cluster}(j)
\end{equation}
Having formulated the loss functions on client- and cluster-level, we next introduce the decentralized learning algorithm DFCA, which allows clients to collaboratively minimize their respective cluster-specific losses while communicating with their neighbors in the graph to exchange results.


\section{Decentralized Federated Clustering Algorithm}

\begin{algorithm}[ht]
\caption{Decentralized Federated Clustering Algorithm (DFCA)}
\label{algo:dfca}
    \begin{algorithmic}[1]
        \State \textbf{Input:} number of clusters $k$, number of iterations $T$
        \State \textbf{Local:} step size $\gamma$, number of local epochs $\tau$
        \State
        \State \underline{DFCA-GI}: initialize $\theta_{i,j}$ per cluster and publish models to all clients
        \State \underline{DFCA-LI}: initialize $\theta_{i,j}$ for all clusters per client (personalized models)
        \State
        \For{$t=0,1,...,T-1$}
            \State $M_t \gets$ subset of worker machines (participating devices)
            \For{worker machine $i \in M_t$}
                \State
                \State \textbf{Step 1:} AssignCluster
                \State $c(i)\leftarrow\underset{j}{\argmin}F_\text{client}(\theta_{i,j}, D_i)$ \\
                \Comment{run local inference on all models}
                \State
                \State \textbf{Step 2:} LocalUpdate
                \For{$q=0,...,\tau-1$}
                \State $\theta_{i, c(i)} \leftarrow  \theta_{i, c(i)} - \gamma \nabla F_\text{client}(\theta_{i,c(i)}, D_i)$ \\
                \Comment{stochastic gradient descent}
                \EndFor
                \State
                \State \textbf{Step 3:} Aggregation
                \For{each cluster $j=1,...,k$}
                    \State $r \gets 0$
                    \For{each neighbor $m \in \mathcal{N}_{i,j}$}
                    \State $r \gets r + 1$
                    \State $\theta_{i,j} \leftarrow \frac{r}{r+1} \theta_{i,j} + \frac{1}{r+1} \theta_{m,j}$ \\
                    \Comment{running average for each cluster}
                    \EndFor
                \EndFor
            \EndFor
        \EndFor
        \end{algorithmic}
\end{algorithm}

DFCA starts with initialization of the model parameters and then iterates three steps: (1) Cluster Assignment, (2) Local Updates, and (3) Decentralized Aggregation. Steps (1) and (2) are similar to existing training loss based, client-side clustered FL algorithms (\cite{IFCA, lin2025fedspd, ClusterFLsurvey}). Step (3) enables decentralized learning.

\textbf{Initialization. } \quad Before detailing the three iterative steps, we explain how the model parameters $\theta_{i,j}$ are initialized.  We consider two variants:  (i) with the \emph{global initialization} method (DFCA-GI), all $k$ models are centrally generated and published via broadcast (or initialized locally using the same seed) before the first iteration, so that every client holds the same model parameters at the beginning. (ii) For the \emph{local initialization} method (DFCA-LI), all clients start on different parameters, i.e., each client can initialize the models locally.

\begin{figure}[ht]
    \centering
    \includegraphics[width=0.7\linewidth]{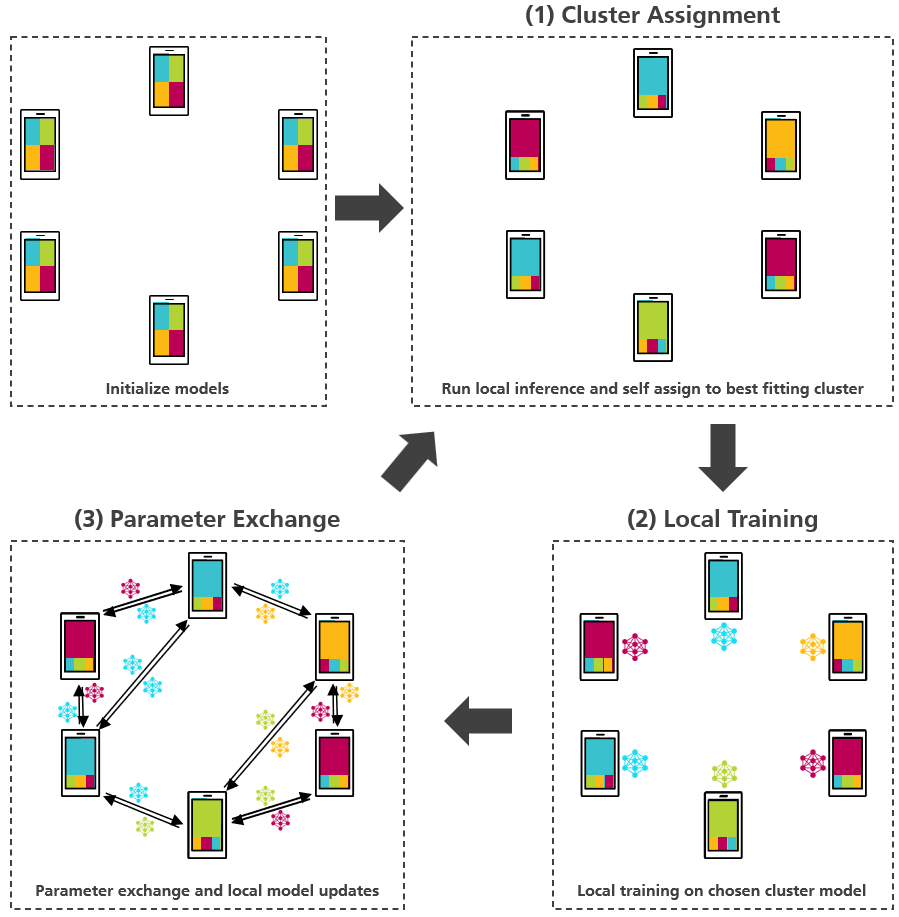}
    \caption{After initialization, DFCA iterates three steps: (1) cluster assignment, (2) local training, and (3) parameter exchange}
    \label{fig:dfca}
\end{figure}

\subsection{Cluster Assignment}

Every client $i$ is assigned to cluster $c(i)\in[k]$ through inference on the current parameters $\theta_{i,j}$. More formally, we update $c(i)$ to be the argmin of the local client loss,
\begin{equation}
   c(i)\leftarrow\underset{j}{\argmin}F_\text{client}(\theta_{i,j}, D_i).
\end{equation}
Hereby, the overall loss $F_\text{global}$ is non-increasing. These cluster assignments are repeated at the start of each training loop.
\subsection{Local Update}

The local update at client $i$ runs several epochs at the client-level using (stochastic) gradient descent on the local data $D_i$ with respect to $\theta_{i,c(i)}$:
\begin{equation}
    \theta_{i, c(i)} \leftarrow  \theta_{i, c(i)} - \gamma \nabla F_\text{client}(\theta_{i,c(i)}, D_i)
\end{equation}
(with learning rate $\gamma$), i.e., we only modify the parameters of the assigned cluster $c(i)$.
Again, the gradient descent ensures that the global loss $F_\text{global}$ is decreasing
(at least in expectation).

\subsection{Decentralized aggregation (a.k.a. communication step)}

The goal of our algorithm is that at the end, all clients hold \emph{all} $k$ trained models.  Thus, limiting the communication to neighbors within the \emph{same} cluster would be suboptimal.  Instead, all clients exchange their parameters with \emph{all} of their neighbors according to the graph and locally average the models. More formally, client $i\in M$ receives parameters from all neighbors in its neighborhood $\mathcal{N}_i$.  
To maintain cluster-specific updates in a sparse decentralized network, clients receive models from their neighbors but only send out the model parameters $\theta_{i,c(i)}$ that they trained themselves in the previous step.

To specify the aggregation equations, we split the neighbors of client $i$ according to their cluster assignments:
\begin{equation}
    \mathcal{N}_{i,j} := \{m \, | \, m\in\mathcal{N}_i \text{ and } c(m)=j\} \subset \mathcal{N}_i
\end{equation}
(for $i\in[N]$ and $j \in[k]$).
In this phase, the clients update the parameter sets for all clusters, not only the one of their assigned cluster $c(i)$.

\paragraph{Batch aggregation}
Next, we define the batch update (synchronous), which assumes that all neighbors $m$ have reported their current models $\theta_{m,j}$:
\begin{equation}
    \theta_{i,j} \leftarrow \frac{1}{|\mathcal{N}_{i,j}|+1}\left(\theta_{i,j} + \sum_{m\in\mathcal{N}_{i,j}} \theta_{m,j}\right)
    \label{eq:batch}
\end{equation}
(for $i \in[N]$ and $j \in[k]$).
\paragraph{Sequential aggregation}
While batch aggregation is the perfect scenario, in practice, neighbors might report their updates asynchronously, and we can never be sure whether a client has disconnected or not. Thus we need  sequential averaging that is robust against failing clients and random arrival times. The basic idea is to replace the averaging in Eq.~\ref{eq:batch} with an online version (a.k.a. running average): we start with the local parameter value $\theta_{i,j}$ and update it as the messages from the other clients come in. Assuming $r$ neighbors have already reported their updates for cluster $j$, we update $\theta_{i,j}$ with:
\begin{equation}
    \theta_{i,j} \leftarrow \frac{r}{r+1} \theta_{i,j} + \frac{1}{r+1} \theta_{m,j} \quad\quad \text{for }r \in[|\mathcal{N}_{i,j}|]
\end{equation}
(for $i \in[N]$ and $j \in[k]$).

Our sequential aggregation naturally supports asynchronous updates, allowing each client to integrate neighbor models immediately as they arrive, which can improve efficiency and reduce idle time in fully distributed deployments. This approach is also memory efficient, as it only requires storing the current estimate per model rather than all neighbor updates. Moreover, using a running average ensures that each incoming model contributes proportionally to the aggregated model, providing a stable and principled approximation of the full batch aggregation even in dynamic and sparse networks.


\subsection{Computational and communication overhead.}
At each iteration, a client performs local training only on its currently assigned cluster-specific model, resulting in a computational cost comparable to standard decentralized SGD. Although each client stores all $k$ models, inference-based cluster assignment requires only forward passes and results in overhead compared to local training. Communication is fully decentralized and limited to parameter exchanges with neighboring clients without relying on global broadcasts and, although not specifically tested in our experiments, global synchronization. The per-round communication cost therefore scales linearly with the neighborhood size and the number of clusters $k$. In typical IoT deployments, where both $k$ and neighborhood sizes are relatively small, this results in manageable overhead, making DFCA suitable for scenarios in which low communication cost is a key requirement \citep{jsan14010009}.

\section{Convergence Summary}

\begin{table}[t]
\centering
\caption{Notation and parameters used in DFCA convergence analysis}
\label{tab:notation}
\begin{tabular}{ll}
\hline
\textbf{Symbol} & \textbf{Description} \\
\hline
$N$ & Number of clients in the network \\
$M=\{1,\dots,N\}$ & Set of clients \\
$k$ & Number of clusters \\
$[k]$ & Cluster index set $\{1,\dots,k\}$ \\
$\mathcal S_j$ & Set of clients belonging to cluster $j$ \\
$\mathcal D_j$ & Data distribution of cluster $j$ \\
$D_i$ & Local dataset of client $i$ \\
$d$ & Data sample \\
$\mathcal L(\theta,d)$ & Sample-wise loss function \\
$F_{\text{client}}(\theta,D_i)$ & Empirical loss of client $i$ \\
$F_{\text{cluster}}(j)$ & Objective of cluster $j$ \\
$F_{\text{global}}$ & Global objective $\sum_{j=1}^k F_{\text{cluster}}(j)$ \\
\hline
$\theta_{i,j}^t$ & Model of client $i$ for cluster $j$ at round $t$ \\
$\Theta_j^t$ & Stacked models $(\theta_{1,j}^t,\dots,\theta_{N,j}^t)$ \\
$\bar\theta_j^t$ & Network-wide average model for cluster $j$ \\
$c_t(i)$ & Cluster assignment of client $i$ at round $t$ \\
$c_\star(i)$ & Ground-truth cluster assignment of client $i$ \\
\hline
$G=(M,E)$ & Communication graph \\
$\mathcal N_i$ & Neighborhood of client $i$ in $G$ \\
$W$ & Mixing matrix (synchronous gossip) \\
$W_t^{(j)}$ & Time-varying mixing matrix for cluster $j$ \\
$\lambda$ & Consensus contraction factor (synchronous case) \\
$\tilde\lambda$ & Windowed contraction factor (asynchronous case) \\
$B$ & Gossip window size in asynchronous setting \\
\hline
$\mathrm{Disp}_j^t$ & Disagreement for cluster $j$ at round $t$ \\
$E_t$ & Disagreement matrix $X_t - \mathbf{1}\bar x_t^\top$ \\
\hline
$\gamma$ & Learning rate (step size) \\
$g_{i,j}(\theta)$ & Stochastic gradient at client $i$, cluster $j$ \\
$L$ & Smoothness constant \\
$\sigma^2$ & Variance bound of stochastic gradients \\
$\mu$ & PL constant (when PL condition holds) \\
$\delta$ & Cluster separability margin (Assumption A5) \\
$\tau$ & Stabilization time of assignments \\
\hline
\end{tabular}
\end{table}

We briefly summarize the convergence properties of DFCA.  Full proofs are deferred to Appendix~\ref{app:convergence}.


\paragraph{Setup}
Each client stores all $k$ models $\{\theta_{i,j}^t\}$ here with index $t$ for the round. Each round executes three steps: (i) cluster assignment by local inference, (ii) local stochastic gradient descent on the assigned model, and (iii) decentralized aggregation with neighbors. Aggregation is carried out via gossip (either synchronous averaging or sequential running averages), which preserves the network-wide average and contracts disagreement among clients.
For cluster $j$, define the stacked vector $\Theta_j^t=(\theta_{1,j}^t,\dots,\theta_{N,j}^t)$ and the network average $\bar \theta^t_j=\tfrac1N\sum_{i=1}^N \theta^t_{i,j}$.
We measure per-cluster \emph{disagreement} by
\(
\mathrm{Disp}^t_j=\frac{1}{N}\sum_{i=1}^N \|\theta^t_{i,j}-\bar \theta^t_j\|^2.
\)
The three steps of one DFCA update round can be written as:
\begin{enumerate}
\item \textbf{Assignment:} \(c_t(i)=\arg\min_{j\in[k]} F_{\text{client}}(\theta^t_{i,j},D_i)\).
\item \textbf{Local descent (assigned index only):}
\begin{equation} 
\theta^{t+\frac12}_{i,c_t(i)}=\theta^t_{i,c_t(i)}-\gamma\,g_{i,c_t(i)}(\theta^t_{i,c_t(i)}),
 \end{equation}
with $\theta^{t+\frac12}_{i,j}=\theta^t_{i,j}\ \ (j\neq c_t(i))$ and stochastic gradient $g_{i,j}$.
\item \textbf{Decentralized aggregation (all $j$):}
\begin{equation} 
\theta^{t+1}_{i,j}=\sum_{m\in\{i\}\cup \mathcal N_i} w^{(j)}_{im,t}\,\theta^{t+\frac12}_{m,j},
 \end{equation}
where $W^{(j)}_t=(w^{(j)}_{im,t})$ respects $G$, is row-stochastic, and is doubly-stochastic in the synchronous (batch) case. In the sequential/asynchronous case, $W^{(j)}_t$ is time-varying with standard joint-connectivity.
\end{enumerate}

We adopt the following standard assumptions.

\begin{enumerate}
[label=(A\arabic*)]
\item \textbf{Smoothness.} For all $i,j$, $F_{\text{client}}(\cdot,D_i)$ is $L$-smooth.
\item \textbf{Noise.} Unbiased stochastic gradients with bounded variance:
\(
\mathbb E[g_{i,j}(\theta)\mid \theta]=\nabla F_{\text{client}}(\theta,D_i),\ 
\mathbb E\|g_{i,j}(\theta)-\nabla F_{\text{client}}(\theta,D_i)\|^2\le \sigma^2.
\)
\item \textbf{Graph mixing.} In the synchronous case there exists a symmetric, doubly-stochastic $W$ respecting $G$ with spectral gap $1-\lambda>0$ such that
\(
\|XW - \mathbf 1 \bar x^\top\|\le \lambda\,\|X-\mathbf 1 \bar x^\top\|
\)
for any row-stacked $X$.
In the asynchronous case, $\{W^{(j)}_t\}$ are row-stochastic, edges are repeatedly activated with bounded delays, and there exists a window $B$ and $\tilde \lambda\in(0,1)$ such that over any $B$ consecutive rounds disagreement contracts by $\tilde \lambda$. (extensive formulation can be found in Appendix \ref{ext:A3})
\item \textbf{Objective curvature.} Either (PL) each $F_{\text{cluster}}(j;\cdot)$ satisfies the $\mu$-Polyak--\L{}ojasiewicz (PL) inequality, or (Cvx) each is convex.
\item \textbf{Separability (IFCA-style).} There exists $\delta>0$ such that, in a neighborhood of the cluster minimizers $\{\theta^\star_j\}_{j=1}^k$, the argmin-of-loss assignment selects the true cluster:
\begin{equation} 
\mathbb E_{d\sim \mathcal D_{c(i)}}\!\big[\mathcal L(\theta_{i,c(i)},d)\big]
\le \min_{j\neq c(i)} \mathbb E_{d\sim \mathcal D_{c(i)}}\!\big[\mathcal L(\theta_{i,j},d)\big]-\delta.
 \end{equation}
\end{enumerate}

On a high-level, the analysis combines two ingredients:
\begin{enumerate}
    \item \emph{Cluster assignment:} Choosing the best-fitting model index per client never increases the global loss, and after sufficient descent the assignments stabilize to the ground-truth clusters.
    \item \emph{Local descent + gossip:} Gradient descent decreases the cluster objectives, up to stochastic noise and a \emph{disagreement penalty}. Gossip averaging preserves the average model and contracts disagreement at a rate governed by the graph spectral gap.
\end{enumerate}
Together, these steps imply that DFCA behaves like $k$ independent instances of decentralized SGD, one per cluster, after a finite burn-in.

\begin{theorem}[Convergence of DFCA]\label{thm:dfca}
Assume (A1)--(A5), choose $\gamma\le c/L$ for a small numerical constant $c$, and let $\lambda$ (resp.\ $\tilde \lambda$) be the consensus factor in the synchronous (respectively asynchronous) case. Then:
\begin{enumerate}[label=(\roman*)]
\item (\emph{Pre-stabilization}) $F_{\text{global}}^t$ is non-increasing in expectation across assignment and local steps. The disagreements $\{\mathrm{Disp}^t_j\}$ remain bounded and contract at rate $\lambda$ (or $\tilde \lambda$ over windows).
\item (\emph{Stabilization}) There exists $\tau<\infty$ such that $c_t(i)=c_\star(i)$ for all $t\ge \tau$.
\item (\emph{Post-stabilization}) For $t\ge \tau$, DFCA is $k$ independent copies of decentralized SGD on $F_{\text{cluster}}(j)$.
\begin{itemize}
\item Under (PL) for some $\mu>0$,
\begin{equation}
\begin{split} 
\mathbb E\!\left[F_{\text{global}}^{\tau+T}-F_{\text{global}}^\star\right] \ \le\ \\
(1-\mu\gamma/2)^T\,C_0
\ +\ O\!\Big(\tfrac{\gamma\sigma^2}{\mu}\Big)
\ +\ O\!\Big(\tfrac{\gamma L}{1-\lambda}\,\sigma^2\Big),
\end{split}
\end{equation}
with $C_0$ depending on the gap at $t=\tau$; in async, replace $(1-\lambda)$ by the windowed $(1-\tilde \lambda)$.
\item Under (Cvx),
\begin{equation}
\begin{split} 
\frac1T \sum_{t=\tau}^{\tau+T-1}\sum_{j=1}^k \mathbb E\|\nabla F_{\text{cluster}}(j;\bar \theta^t_j)\|^2 \ \le\ \\
O\!\Big(\frac{F_{\text{global}}^\tau-F_{\text{global}}^\star}{\gamma T}\Big)
+O(\gamma L\sigma^2)
+O\!\Big(\tfrac{\gamma L}{1-\lambda}\sigma^2\Big),
\end{split}
\end{equation}
and choosing $\gamma=\Theta(1/\sqrt{T})$ yields the usual $O(1/\sqrt{T})$ rates (with the consensus penalty).
\end{itemize}
\end{enumerate}
\end{theorem}

\paragraph{Takeaway}
DFCA converges at essentially the same rate as decentralized SGD, up to an additional term reflecting network connectivity. Crucially, all clients obtain all $k$ cluster models despite decentralized, asynchronous communication. The appendix provides a detailed proof by combining IFCA’s cluster-assignment arguments with standard decentralized SGD analyses.

\section{Experiments}
Next, we present our experiments with DFCA in practical learning settings. As common in the clustered FL literature \citep{IFCA, lin2025fedspd, FEDSOFT}, we conduct experiments on the MNIST \citep{MNIST}, EMNIST \citep{emnist}, CIFAR-10 \citep{cifar}, and FEMNIST \citep{femnist} datasets, while applying rotations to the data to create different distributions. Our method is compared to the decentralized soft-clustering method FedSPD \citep{lin2025fedspd} and the optimized Decentralized Federated Averaging algorithm DFedAvgM \cite{sun2021decentralizedfederatedaveraging}. IFCA \citep{IFCA} serves as the centralized baseline. After providing results for additional experiments with different connection probabilities, we discuss the communication efficiency and analyze the results of our experiments.

\subsection{Experimental Setting}
\textbf{EMNIST:}
For the training with EMNIST \citep{emnist} (balanced split), we use $N=100$ clients for $k=2$ and $N=200$ for $k=4$ clusters and simulate two or four different data distributions by augmenting the datasets, applying 0, 180 or 0, 90, 180, 270 degree rotations to the data. The Convolutional Neural Network (CNN) used for training contains two convolutional layers, each followed by a ReLU activation function, a max-pool layer, and a batch normalization layer. The models are trained for $\tau=5$ local epochs with a learning rate of $\gamma=0.1$, using Stochastic Gradient Descent over $T=150$ global iterations. For the connection between clients, we use the adjacency matrix of an Erdős–Rényi graph with a connection probability of $0.15$. All experiments are run on five random seeds, with the metric values being averaged over all runs.

\textbf{MNIST:}
The training with MNIST \citep{MNIST} is conducted on $N=240$ clients and $k=4$ clusters and data distributions (0, 90, 180, 270 degree rotations). We use a simple Multilayer Perceptron (MLP) with one hidden layer of size 2048 followed by a ReLU activation function. The other training parameters stay consistent with the EMNIST experimental setting, with the exception of reducing the connection probability to $0.1$.

\textbf{CIFAR-10:}
 The setup for our experiments with the CIFAR-10 \citep{cifar} dataset is similar to the EMNIST setup. We train with an identical CNN architecture over $N=50$ clients and $k=2$ clusters. We change the learning rate to $\gamma=0.25$ and the graph connection probability to $0.2$.

\textbf{FEMNIST:}
To test the algorithm in settings with even higher heterogeneity, we conducted experiments on the FEMNIST \citep{femnist} dataset. The training is done on $N=400$ clients, who each get data from one distinct writer, with $k=3,4,10$ clusters with a graph connection probability of $0.2$ and all other parameters equal to the MNIST experiments.

\begin{figure*}[!t]
    \centering
    \includegraphics[width=0.7\linewidth]{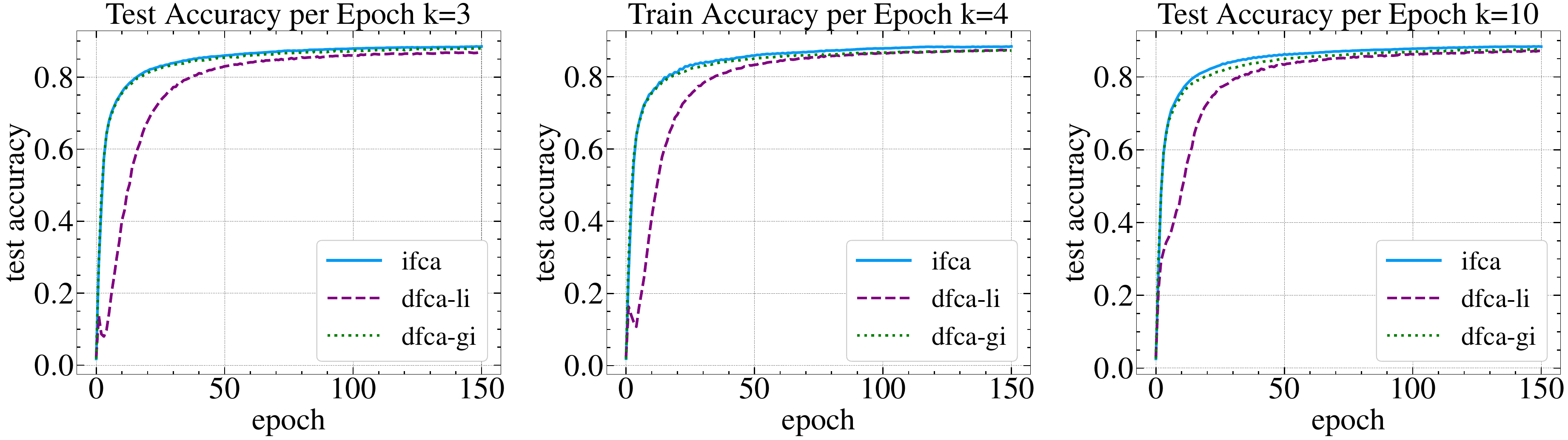}
    \caption{Graphs for FEMNIST experiments with different values for $k$}
    \label{fig:femnist}
\end{figure*}

\begin{figure}
    \centering
    \includegraphics[width=\linewidth]{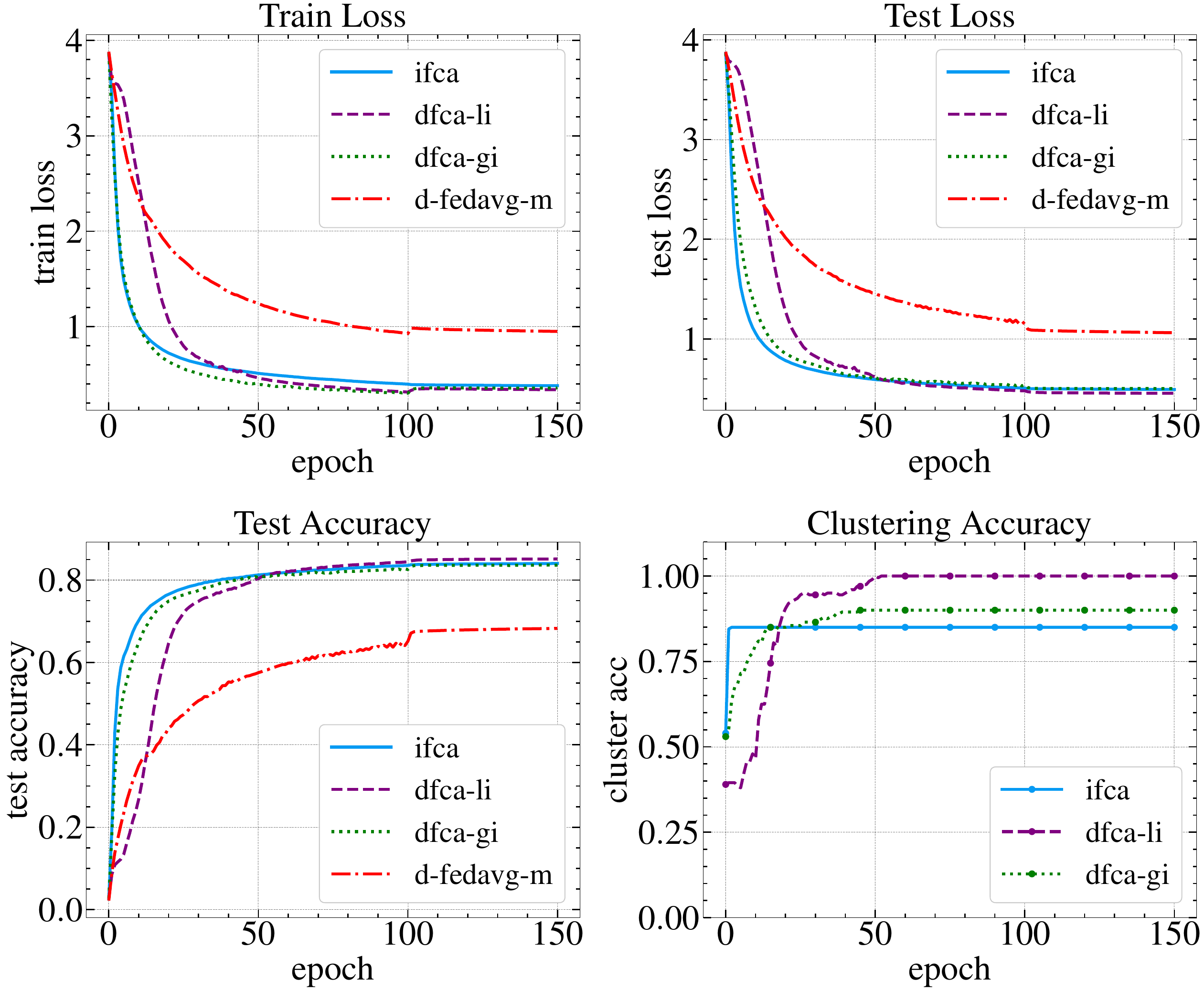}
    \caption{Plots for EMNIST experiments with edge connection probability of $0.15$, $N=200$, $k=4$ on IFCA, DFCA LI and GI and D-FedAvg-M}
    \label{fig:emnist_plots}
\end{figure}

\begin{figure}
    \centering
    \includegraphics[width=\linewidth]{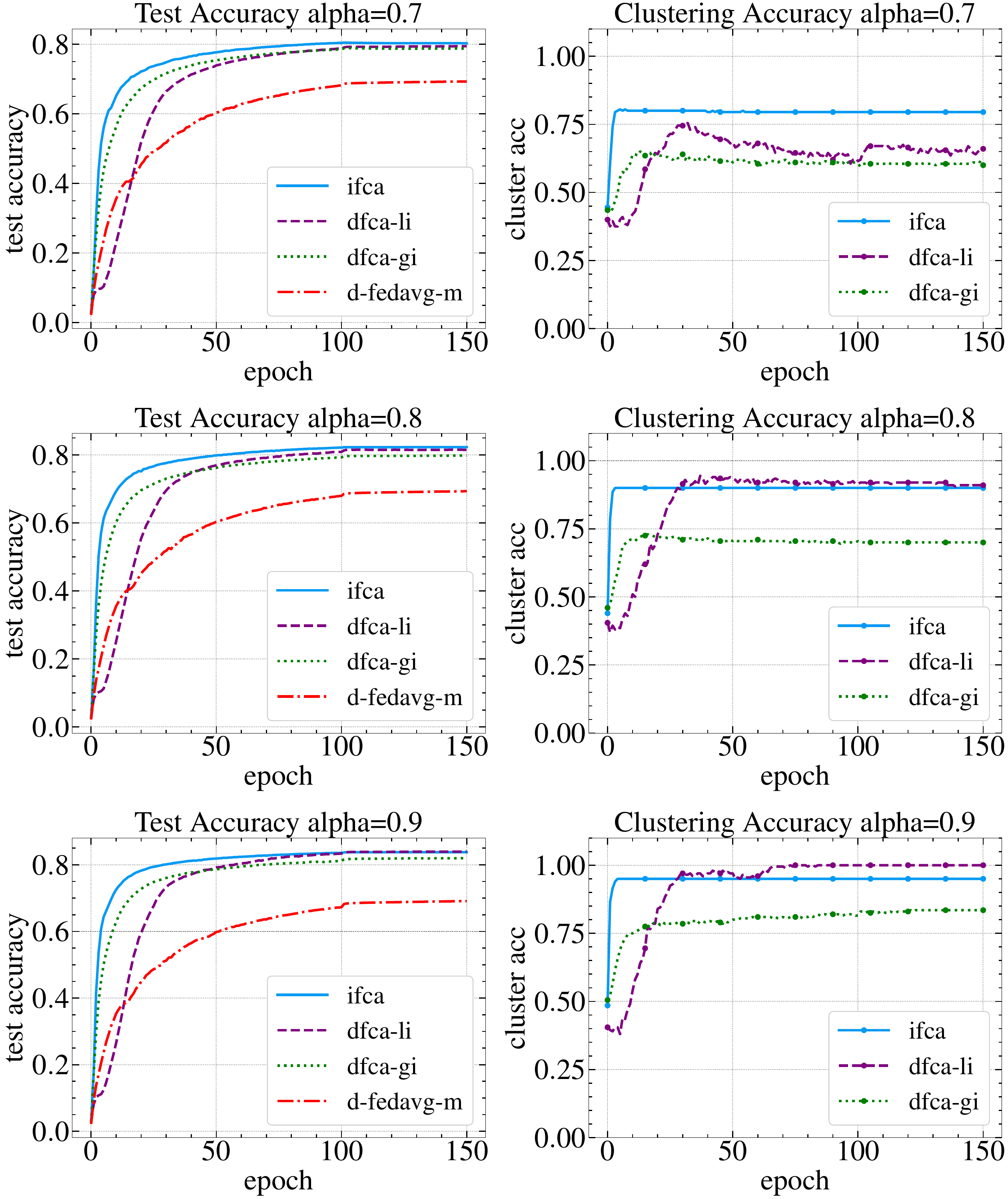}
    \caption{Results for cluster-inconsistent noisy feature skew experiments on EMNIST with connection probability of $15\%$, $k=4$, $\alpha=0.8$ and $\alpha=0.9$}
    \label{fig:alpha_skew_rand}
\end{figure}

\begin{figure}
    \centering
    \includegraphics[width=\linewidth]{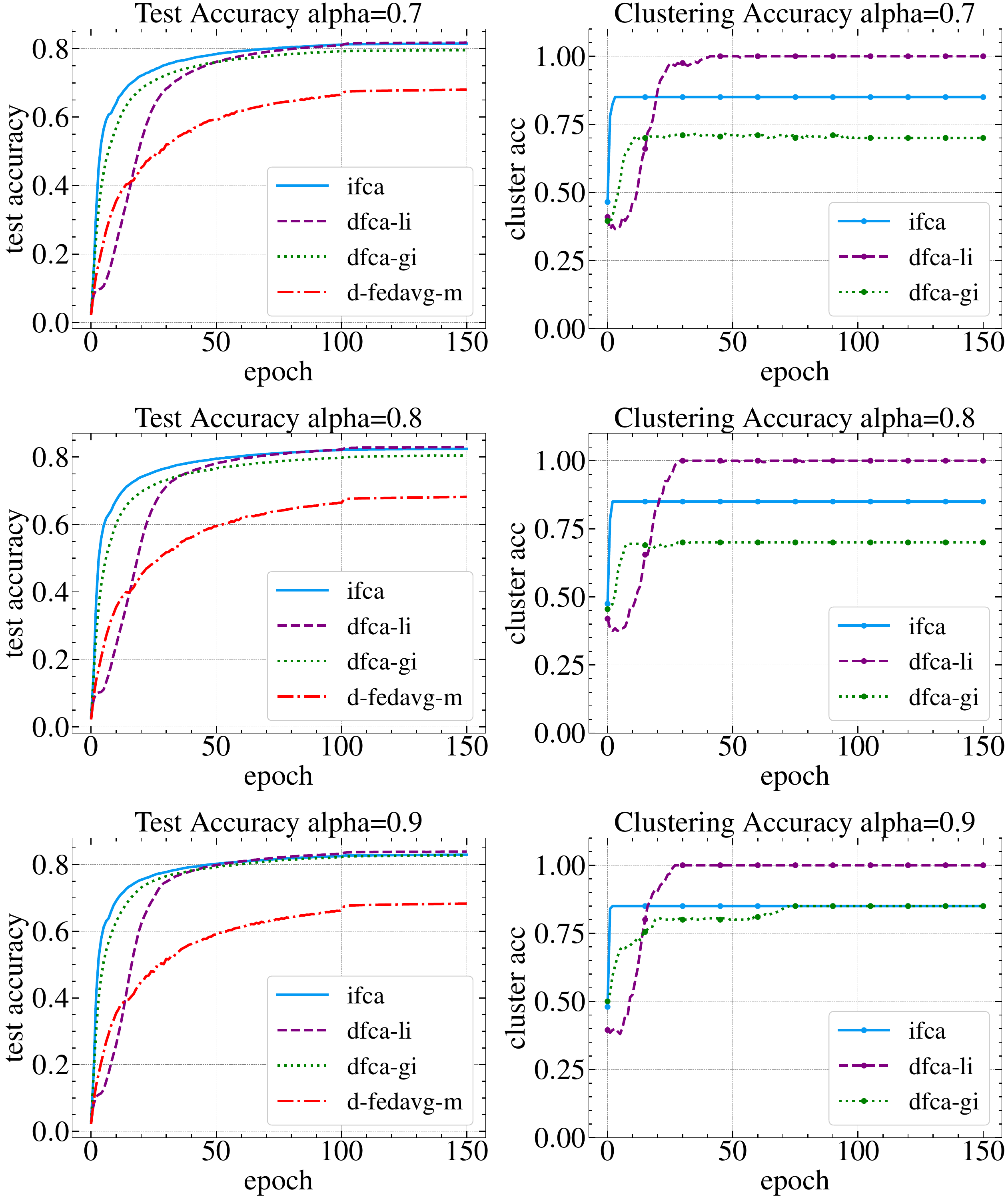}
    \caption{Results for cluster-consistent noisy feature skew experiments on EMNIST with connection probability of $15\%$, $k=4$, $\alpha=0.8$ and $\alpha=0.9$}
    \label{fig:alpha_skew}
\end{figure}

\subsection{General Accuracy Analysis}
Our experiments demonstrate that DFCA consistently outperform the decentralized baselines FedSPD and DFedAvgM while achieving accuracy comparable to the centralized IFCA algorithm (Table \ref{table:accuracy_one_r}). The reason for FedSPD underperforming is that it represents a decentralized FL approach that adapts a single local model per client via regularization-based decentralized averaging. In the presence of clustered heterogeneity, this limits effective model sharing among clients with similar data distributions, whereas DFCA explicitly maintains shared cluster-specific models. The plots in Figure \ref{fig:emnist_plots} additionally show DFCA-GI converging at a similar rate as IFCA, while DFCA-LI converges slower but steeper than the other two methods. In more heterogeneous settings with larger numbers of clients (MNIST, EMNIST, FEMNIST; Table \ref{table:accuracy_two_r}), DFCA maintains competitive performance, indicating that the sequential aggregation strategy effectively preserves cluster-specific models even as heterogeneity increases. IFCA's unusually high standard deviation for the EMNIST experiments with $k=4$ occurs because IFCA detected only three clusters in one of its five runs. The topic of misclusterings is further discussed in the next sections.

\begin{table*}[!t]
\caption{Results of experiments with EMNIST ($N=100$ clients) and CIFAR-10 ($N=50$ clients)}
	\centering
	\begin{tabular}{ |c|c|c|c|c|c| } 
	\hline
     & \multicolumn{4}{c|}{DFL} & \multicolumn{1}{c|}{CFL} \\ 
    \hline
	 Dataset & DFCA-GI (ours) & DFCA-LI (ours) & FedSPD & DFedAvgM & IFCA \\ 
     \hline
    MNIST & 93.7 $\pm$ 0.07 & 92.9 $\pm$ 0.06 & 86.2 $\pm$ 1.52 & 91.4 $\pm$ 0.21 & \textbf{93.9 $\pm$ 0.06} \\
    \hline
    EMNIST & \textbf{85.7 $\pm$ 0.13} & 85.3 $\pm$ 0.09 & 79.7 $\pm$ 0.92 & 73.5 $\pm$ 1.19 & \textbf{85.7 $\pm$ 0.11} \\
    \hline
    CIFAR-10 & 81.5 $\pm$ 0.40 & 80.4 $\pm$ 0.22 & 78.9 $\pm$ 0.23  & 76.0 $\pm$ 0.96 & \textbf{82.5 $\pm$ 0.11 } \\
    \hline
    \end{tabular}
    \label{table:accuracy_one_r}
\end{table*}

\begin{figure}
    \centering
    \includegraphics[width=0.8\linewidth]{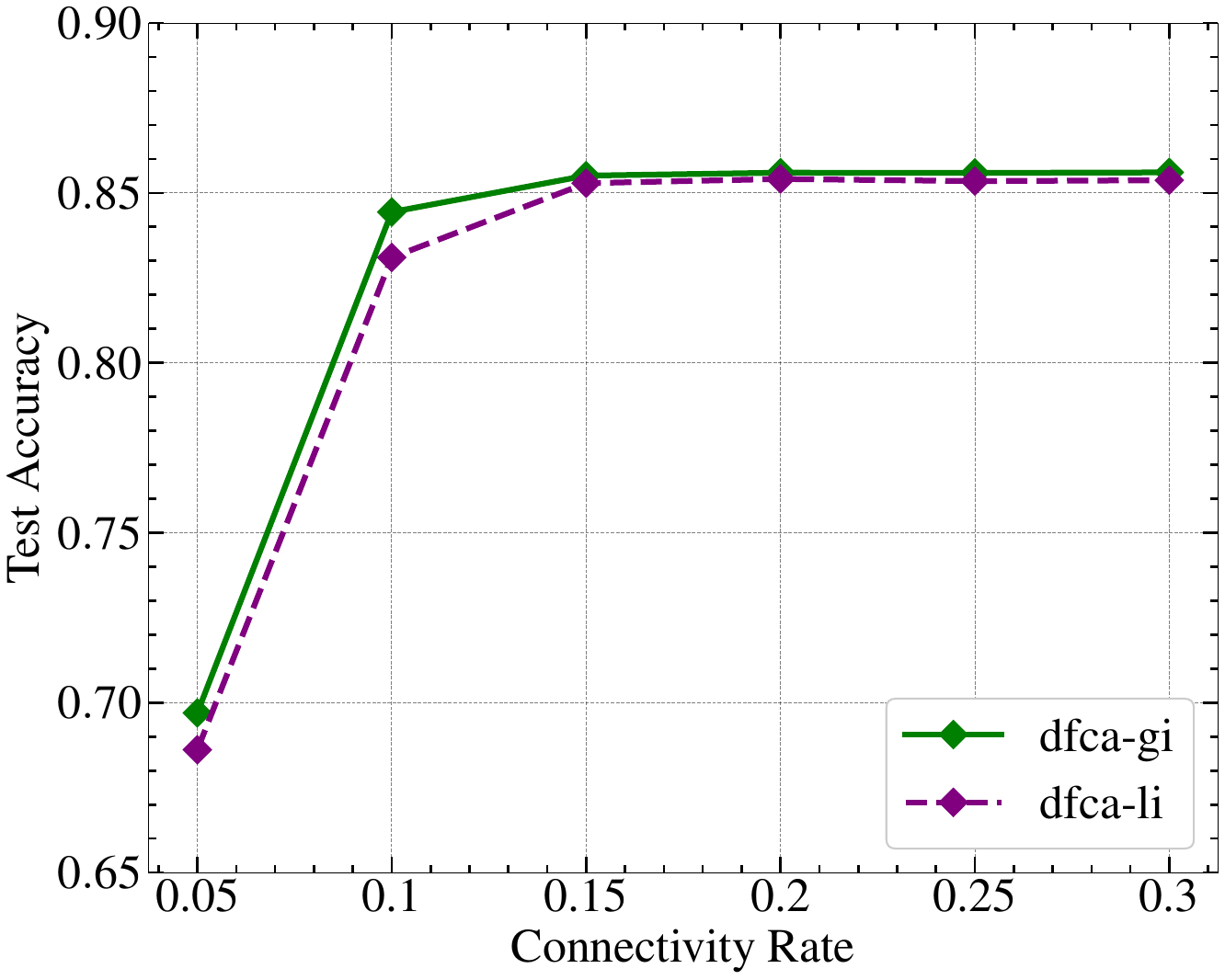}
    \caption{Test Accuracy of DFCA-LI and DFCA-GI under different connectivity settings (EMNIST, $k=2$, $N=100$)}
    \label{fig:connectivity}
\end{figure}

\begin{table}[!ht]
\caption{Additional comparisons with IFCA for $k=4$, $N=200$ on MNIST, $N=100$ on EMNIST, and $N=400$ on FEMNIST.}
	\centering
	\begin{tabular}{ |c|c|c|c| } 
	\hline
     & \multicolumn{2}{c|}{DFL} & \multicolumn{1}{c|}{CFL} \\ 
    \hline
	Dataset & DFCA-GI (ours) & DFCA-LI (ours) & IFCA \\ 
     \hline
    MNIST & 92.8 $\pm$ 0.63 & 92.4 $\pm$ 0.22 & \textbf{93.1 $\pm$ 0.73} \\
    \hline
    EMNIST & 83.6 $\pm$ 2.10 & \textbf{85.1 $\pm$ 0.10} & 84.0 $\pm$ 1.82  \\
    \hline
    FEMNIST & 87.1 $\pm$ 0.30 & 86.4 $\pm$ 0.15 & \textbf{88.2 $\pm$ 0.11}  \\
    \hline
    \end{tabular}   
    \label{table:accuracy_two_r}
\end{table}

\textbf{Insights. } \quad In DFL, the way clients exchange model updates plays a crucial role in both convergence and efficiency. Beyond simple averaging, enabling clustered FL in decentralized networks is particularly valuable, as it allows clients with heterogeneous data to specialize in distinct model clusters without relying on a central coordinator. The general advantages of DFL, such as improved scalability, resilience to single points of failure, and better suitability for bandwidth-limited or peer-to-peer networks, have already been highlighted in prior works \citep{lalitha2018fully, lalitha2019peer, DFL}. The results in Tables \ref{table:accuracy_one_r} and \ref{table:accuracy_two_r} show that DFCA not only outperforms the decentralized baselines but also does not fall short when compared to centralized IFCA.  Despite evidence that DFL lags behind CFL \citep{sun2024which}, we reduce the accuracy difference to about ~1\% in CFL’s favor, including in non-IID and low-connectivity settings.

\subsection{Noisy Feature Skew Analysis}
We further assess the robustness of DFCA under more challenging heterogeneous feature distributions, by introducing two additional scenarios and varying the dominance ratio $\alpha \in \{0.7, 0.8, 0.9\}$ (with $\alpha=1$ matching the main setup). Such feature skews can be seen in IoT and distributed sensing applications, where devices operate under heterogeneous and dynamically changing conditions (e.g., sensor noise, calibration differences, environmental effects, or device-specific preprocessing), leading to partially misaligned feature transformations across clients. The setup is distributed in two different styles:

1. \textbf{Cluster-consistent skew}: Clients belonging to the same cluster share the same dominant transformation, while retaining a non-dominant share $(1-\alpha)$ of samples with a different transformation. This setting models coherent subpopulations with strong intra-cluster similarity.

2. \textbf{Cluster-inconsistent skew}: Each client receives its non-dominant transformation independently at random, introducing a more challenging level of noise to the clusters. This represents heterogeneous environments in which local feature distributions are not aligned across clients.

The results of our experiments are illustrated in Table \ref{table:accuracy_alpha_r} and Figure \ref{fig:alpha_skew_rand}.

\textbf{Insights. } \quad DFCA-LI is more robust than both DFCA-GI and IFCA under more noisy feature distribution settings, where clusters are less clear. The method clusters more accurately under most settings, and outperforms all other decentralized methods and in most cases even the centralized baseline IFCA in terms of accuracy. The accuracy gains are largely due to the instability observed in IFCA and DFCA-GI, whose clustering assignments degrade substantially under higher noise levels (see Figure \ref{fig:alpha_skew_rand} and Table \ref{table:accuracy_alpha_r}).

\begin{table*}[!t]
\caption{Experiments with EMNIST ($N=200$ clients) and on noisy feature distributions}
	\centering
	\begin{tabular}{ |c|c|c|c|c|c| } 
	\hline
     & \multicolumn{3}{c|}{DFL} & \multicolumn{1}{c|}{CFL} \\ 
    \hline
	 Setting & DFCA-GI (ours) & DFCA-LI (ours) & DFedAvgM & IFCA \\ 
     \hline
    Consistent, $\alpha=0.7$ & 80.5 $\pm$ 1.11 & \textbf{82.9 $\pm$ 0.11} & 68.2 $\pm$ 0.30 & 82.4 $\pm$ 1.01 \\
    \hline
    Consistent, $\alpha=0.8$ & 80.5 $\pm$ 1.11 & \textbf{82.9 $\pm$ 0.11} & 68.2 $\pm$ 0.30 & 82.4 $\pm$ 1.08 \\
    \hline
    Consistent, $\alpha=0.9$ & 82.8 $\pm$ 1.05 & \textbf{83.9 $\pm$ 0.27} & 68.3 $\pm$ 0.35 & 82.9 $\pm$ 1.78 \\
    \hline
    Inconsistent, $\alpha=0.7$ & 79.8 $\pm$ 1.47 & 81.5 $\pm$ 1.64 & 69.3 $\pm$ 0.24 & \textbf{82.3 $\pm$ 1.58}   \\
    \hline
    Inconsistent, $\alpha=0.8$ & 79.8 $\pm$ 1.46 & 81.5 $\pm$ 1.64 & 69.3 $\pm$ 0.24 & \textbf{82.3 $\pm$ 1.57}  \\
    \hline
    Inconsistent, $\alpha=0.9$ & 82.0 $\pm$ 1.64 & \textbf{84.0 $\pm$ 0.15} & 69.1 $\pm$ 0.31 & 83.8 $\pm$ 1.73  \\
    \hline
    \end{tabular}
    \label{table:accuracy_alpha_r}
\end{table*}

\subsection{Connectivity Analysis}
Figure \ref{fig:connectivity} shows the test accuracy of DFCA-LI and DFCA-GI under different, fixed connectivity settings on EMNIST. There, we can observe that a connectivity of $0.15$ is sufficient and the test accuracy does not change significantly when further increasing the connectivity rate. In settings with connectivity probabilities below $0.1$, DFCA-LI attains slightly lower accuracies than DFCA-GI, which can be attributed to its slower convergence caused by the additional time required for clustering, as seen in Figures \ref{fig:emnist_plots} and \ref{fig:alpha_skew_rand}.

\textbf{Insights. } \quad DFCA leverages a sequential running average to integrate neighbor updates efficiently, avoiding the need to store all incoming models and allowing updates to proceed asynchronously as they arrive. As a result, it improves scalability and robustness to network sparsity, while still achieving accuracy comparable to the centralized IFCA baseline.

\subsection{Misclusterings and finding the right k}
When the number of data distributions is known a priori, as often considered in the literature \citep{IFCA}, misclusterings are rather rare. During the experiments, we saw that DFCA is more robust to misclusterings than IFCA, as IFCA sometimes struggled with finding all correct clusters for 4 data distributions. IFCA only found three of four clusters in one to two out of five runs, which resulted in lower accuracy than DFCA in the EMNIST results. However, even when IFCA misclusters, the performance remains competitively high with only marginal differences in accuracy.

\textbf{Insights. } \quad In the case of natural feature skew, like in FEMNIST, we started with $k=10$ for $N=400$ clients, where each client receives data from one distinct writer. In the experiments, we saw that all methods (DFCA-LI, DFCA-GI, IFCA) detected between three and five clusters. This occurs because the feature differences between writers are often subtle, making it difficult for the models to reliably distinguish fine-grained variations. As a result, they tend to generalize across writers and merge them into fewer effective clusters. The comparisons of performance for $k=3,4,10$ can be seen in Table \ref{table:accuracy_femnist_r} and in further plots in the appendix, where the differences in performance are not substantial. Using $k=3$ instead of $k=10$ substantially reduces the communication and memory cost, as less models need to be saved and maintained.

\begin{table}[!ht]
	\centering
\caption{Results on FEMNIST for different $k$ with $N=400$ clients, each holding data from a distinct writer}
	\begin{tabular}{ |c|c|c|c| } 
	\hline
     & \multicolumn{2}{c|}{DFL} & \multicolumn{1}{c|}{CFL} \\ 
    \hline
	Num Clusters & DFCA-GI (ours) & DFCA-LI (ours) & IFCA \\ 
     \hline
    $k=3$ & 87.9 $\pm$ 0.59 & 86.9 $\pm$ 0.89 & \textbf{88.5 $\pm$ 0.52} \\
    \hline
    $k=4$ & 87.3 $\pm$ 0.62 & 87.3 $\pm$ 0.27 & \textbf{88.4 $\pm$ 0.12}  \\
    \hline
    $k=10$ & 87.5 $\pm$ 0.76 & 87.1 $\pm$ 0.79 & \textbf{88.3 $\pm$ 0.43}  \\
    \hline
    \end{tabular}   
    \label{table:accuracy_femnist_r}
\end{table}

\section{Conclusion}
\paragraph{Conclusion}
In this work, we introduced DFCA, a fully serverless method inspired by IFCA, that allows cluster-specific models to emerge and propagate through heterogeneous, sparse peer-to-peer networks. By employing a sequential running-average aggregation scheme, DFCA leverages stable learning with high clustering accuracy in heterogeneous environments where centralized methods are impractical. Our experimental results demonstrate that DFCA achieves performance comparable to centralized IFCA while operating under decentralized communication constraints, and it consistently outperforms decentralized FedAvg with momentum and FedSPD.

\paragraph{Discussion}
Having demonstrated that DFCA effectively addresses decentralized clustered learning under heterogeneous data distributions, we now briefly discuss practical aspects that our design enables. The running-average update scheme can improve wall-clock efficiency in decentralized clustered FL. Unlike synchronous averaging, where all clients must wait for the slowest participant, running averages allow clients to incorporate neighbor updates at their own pace. Slow clients contribute less frequently but do not block progress for others, which is particularly advantageous in heterogeneous IoT or edge networks. This enables faster, more continuous progress and can reduce overall time-to-convergence. As our experiments did not include tests on random arrival times or communication delays, quantifying this effect is left for future work.

DFCA is well-suited for applications where data are inherently clustered and no central server is available or desirable, such as industrial IoT, sensor networks, or smart-city deployments. A deeper investigation of communication overheads and deployment characteristics is an important direction for future work.

\ifCLASSOPTIONcaptionsoff
  \newpage
\fi

\bibliography{references}
\bibliographystyle{plain}

\appendix
\section{Appendix}

\subsection{Convergence Analysis}
\label{app:convergence}
We provide a proof template that reuses standard ingredients from clustered FL (e.g., \cite{IFCA}) for the assignment and from decentralized SGD/gossip (e.g., \citep{lian2017,koloskova2020unified,boyd2006randomized,nedic2018network}) for communication. Throughout, expectations are with respect to the stochasticity of data sampling and any communication randomness.

\paragraph{Notation}
Clients are $M=\{1,\dots,N\}$, connected by an undirected graph $G=(M,E)$ with neighborhoods $\mathcal N_i$.
The $k$ cluster index set is $[k]=\{1,\dots,k\}$ and the (unknown) partition is $\{\mathcal S_1,\dots,\mathcal S_k\}$ with data distributions $\{\mathcal D_1,\dots,\mathcal D_k\}$.
Client $i$ stores parameters $(\theta_{i,1},\dots,\theta_{i,k})\in (\mathbb R^d)^k$.
For cluster $j$, define the stacked vector $\Theta_j=(\theta_{1,j},\dots,\theta_{N,j})$ and the network average $\bar \theta^t_j=\tfrac1N\sum_{i=1}^N \theta^t_{i,j}$.
The client loss is
$$F_{\text{client}}(\theta_{i,j},D_i)=\frac{1}{|D_i|}\sum_{d\in D_i}\mathcal L(\theta_{i,j},d),$$
$$F_{\text{cluster}}(j)=\sum_{i\in\mathcal S_j}F_{\text{client}}(\theta_{i,j},D_i),$$
$$F_{\text{global}}=\sum_{j=1}^k F_{\text{cluster}}(j).$$

We measure per-cluster \emph{disagreement} by
\begin{equation}
    \mathrm{Disp}^t_j=\frac{1}{N}\sum_{i=1}^N \|\theta^t_{i,j}-\bar \theta^t_j\|^2.
\end{equation}

\paragraph{Assumption 3 Extension}\label{ext:A3}
Let $X_t$ be the row-stacked matrix of client models (for any fixed cluster index $j$),
let $\bar x_t = \tfrac{1}{N}\mathbf{1}^\top X_t$ be the network average,
and define the disagreement
\begin{equation}
E_t = X_t - \mathbf{1}\bar x_t^\top.
\end{equation}

\textbf{Synchronous mixing.}
There exists a symmetric, doubly-stochastic matrix $W$ respecting $G$ with spectral gap $1-\lambda>0$ such that

\begin{equation}
\|E_{t+1}\|*{F} = \|X_t W - \mathbf{1}\bar x_t^\top\|*{F} \le \lambda\,\|E_t\|_{F}.
\end{equation}

\textbf{Asynchronous (gossip) mixing.}
Let $\{W_t^{(j)}\}$ be the sequence of row-stochastic mixing matrices arising from pairwise/asynchronous gossip.

Assume edges are activated with bounded delays and there exists a window size $B$ and a contraction factor $\tilde\lambda \in (0,1)$ such that
\begin{equation}
\mathbb{E}\Big[\|E_{t+B}\|*{F}^{2} \,\big|\, E_t \Big] \le \tilde\lambda\, \|E_t\|*{F}^{2}.
\end{equation}

These are standard consensus contraction assumptions in decentralized optimization
(i.e. \citep{lian2017, koloskova20a}).

\begin{lemma}[Assignment is descent for $F_{\text{global}}$]\label{lem:assign}
Conditioned on parameters $\{\theta^t_{i,j}\}$, the assignment step does not increase $F_{\text{global}}$:
\begin{equation} 
\sum_{i=1}^N \min_{j} F_{\text{client}}(\theta^t_{i,j},D_i)
\ \le\
\sum_{i=1}^N F_{\text{client}}(\theta^t_{i,c_{t-1}(i)},D_i).
 \end{equation}
\end{lemma}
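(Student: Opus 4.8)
The plan is to reduce the statement to a per-client comparison, after which it follows immediately from the defining property of the $\arg\min$ assignment. First I would record the elementary bookkeeping fact that the value of $F_{\text{global}}$ induced by a cluster assignment $c:\,[N]\to[k]$ collapses to one term per client:
\begin{equation}
\sum_{j=1}^k\sum_{i:\,c(i)=j}F_{\text{client}}(\theta^t_{i,j},D_i)=\sum_{i=1}^N F_{\text{client}}\big(\theta^t_{i,c(i)},D_i\big),
\end{equation}
since each client $i$ is assigned to exactly one cluster and therefore contributes exactly once, through the single index $c(i)$. The right-hand side of the lemma is precisely this expression evaluated at the incoming assignment $c_{t-1}$, and the left-hand side is the same expression evaluated at the updated assignment $c_t(i)=\arg\min_{j\in[k]}F_{\text{client}}(\theta^t_{i,j},D_i)$. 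A crucial but trivial observation is that the assignment step only relabels indices and leaves every stored parameter $\theta^t_{i,j}$ untouched, so the comparison is between the same fixed numbers indexed in two different ways.

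Next I would apply the definition of the minimum client-wise: for every $i\in[N]$, the index $c_{t-1}(i)\in[k]$ is an admissible competitor in the minimization defining $c_t(i)$, hence
\begin{equation}
F_{\text{client}}\big(\theta^t_{i,c_t(i)},D_i\big)=\min_{j\in[k]}F_{\text{client}}(\theta^t_{i,j},D_i)\ \le\ F_{\text{client}}\big(\theta^t_{i,c_{t-1}(i)},D_i\big).
\end{equation}
Summing this inequality over $i=1,\dots,N$ and using the decomposition above yields the claimed bound. Because the parameters are held fixed throughout the assignment step, the inequality holds deterministically conditioned on $\{\theta^t_{i,j}\}$, so no expectation is needed here (expectations only re-enter in the local-descent step).

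There is essentially no obstacle in this argument: the only point worth stating carefully is the decomposition of $F_{\text{global}}$ into a sum of single-index client terms, which is exactly what allows a client-by-client improvement to aggregate into a global one. I would present that identity explicitly and then let the sum of the pointwise $\arg\min$ inequalities finish the proof; the same observation is what makes the assignment step interoperate cleanly with the subsequent local-descent and gossip steps in the round-by-round analysis.
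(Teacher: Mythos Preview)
Your argument is correct and is precisely the approach the paper takes: the proof in the paper is the one-line observation that a pointwise $\arg\min$ over $j$ for each client can only decrease the sum, and your proposal simply unpacks that observation into the per-client decomposition and the $\min$ inequality. There is no substantive difference in method, only in level of detail.
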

\begin{proof}
Pointwise argmin over $j$ per client $i$ can only reduce the sum; cf.\ \cite{IFCA}.
\end{proof}

\begin{lemma}[Local SGD descent with disagreement penalty]\label{lem:localsgd}
Let $\gamma\le 1/L$. Then, conditioned on $\Theta^t$,
\begin{equation} 
\begin{split}
\mathbb E\!\left[F_{\text{cluster}}(j;\bar \theta^{t+\frac12}_j)\mid \Theta^t\right]\ \le\  \\
F_{\text{cluster}}(j;\bar \theta^{t}_j) 
-\frac{\gamma}{2}\|\nabla F_{\text{cluster}}(j;\bar \theta^t_j)\|^2
+\gamma^2 L\Big(\sigma^2 + L^2\,\mathrm{Disp}^t_j\Big).
\end{split}
\end{equation}
\end{lemma}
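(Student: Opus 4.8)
The plan is to collapse the claim onto the single quantity that appears on both sides, the network average $\bar\theta_j$, and to run the textbook $L$-smoothness descent argument on it. Because the local step only touches the copies of model $j$ held by the clients currently assigned to cluster $j$, we have $\bar\theta^{t+\frac12}_j=\bar\theta^t_j-\gamma\,\bar g^t_j$, where $\bar g^t_j$ is the average of the stochastic gradients $g_{i,j}(\theta^t_{i,j})$ over the clients that performed a descent step on model $j$ (the normalization by the number of such clients is folded into absolute constants below). Conditioned on $\Theta^t$, assumption (A2) gives $\mathbb{E}[\bar g^t_j\mid\Theta^t]=\bar\nabla^t_j:=\tfrac1{|\mathcal S_j|}\sum_{i\in\mathcal S_j}\nabla F_{\text{client}}(\theta^t_{i,j},D_i)$. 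The key observation is that $\bar\nabla^t_j$ is the average of the \emph{true} client gradients evaluated at the \emph{dispersed} local iterates, so by $L$-smoothness it differs from $\nabla F_{\text{cluster}}(j;\bar\theta^t_j)$ by at most $L$ times the root-mean-square deviation of the $\theta^t_{i,j}$ from $\bar\theta^t_j$, i.e.\ by $O(L\sqrt{\mathrm{Disp}^t_j})$; this is exactly how the disagreement penalty enters.

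Concretely, I would apply the descent inequality for the $L$-smooth map $F_{\text{cluster}}(j;\cdot)$ at $\bar\theta^t_j$ with increment $-\gamma\bar g^t_j$ and take the conditional expectation given $\Theta^t$:
\begin{equation}
\mathbb{E}\!\left[F_{\text{cluster}}(j;\bar\theta^{t+\frac12}_j)\mid\Theta^t\right]
\le F_{\text{cluster}}(j;\bar\theta^t_j)
-\gamma\big\langle\nabla F_{\text{cluster}}(j;\bar\theta^t_j),\,\bar\nabla^t_j\big\rangle
+\tfrac{\gamma^2 L}{2}\,\mathbb{E}\!\left[\|\bar g^t_j\|^2\mid\Theta^t\right].
\end{equation}
Writing $\bar\nabla^t_j=\nabla F_{\text{cluster}}(j;\bar\theta^t_j)+e^t_j$, I would expand the inner product and the quadratic term, then apply Young's inequality to the cross terms with a weight chosen so that (using $\gamma L\le 1$) the coefficient of $\|\nabla F_{\text{cluster}}(j;\bar\theta^t_j)\|^2$ collapses to exactly $-\tfrac\gamma2$; this leaves a residual $\tfrac\gamma2\|e^t_j\|^2$ plus a nonpositive multiple of $\|\bar\nabla^t_j\|^2$ that can be discarded. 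Then $\|e^t_j\|^2\le L^2\cdot\tfrac1{|\mathcal S_j|}\sum_{i\in\mathcal S_j}\|\theta^t_{i,j}-\bar\theta^t_j\|^2=O(L^2\mathrm{Disp}^t_j)$ by Jensen and smoothness, and $\mathbb{E}[\|\bar g^t_j\|^2\mid\Theta^t]=\|\bar\nabla^t_j\|^2+\mathbb{E}[\|\bar g^t_j-\bar\nabla^t_j\|^2\mid\Theta^t]$ with the second term $\le\sigma^2$ by (A2) and conditional independence of the fresh minibatch noise across clients. Collecting the surviving terms yields the asserted inequality, with a disagreement penalty of order $\gamma L^2\mathrm{Disp}^t_j$ and a noise term of order $\gamma^2L\sigma^2$.

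The genuine work is not the calculus but two bookkeeping points. First, $F_{\text{cluster}}(j)$ is defined as a sum over the ground-truth partition $\mathcal S_j$, whereas the local step is taken by the clients currently \emph{assigned} to $j$; the two sets coincide once assignments freeze (Theorem~\ref{thm:dfca}(ii)), so the cleanest reading of the lemma is for $t\ge\tau$, and before stabilization one must carry an extra term accounting for the mismatch (or restrict to the post-stabilization phase, which is all the main theorem needs). Second, $\mathrm{Disp}^t_j$ averages over all $N$ clients while the deviations that control $\|e^t_j\|$ range only over $\mathcal S_j$; this introduces cluster-size factors $N/|\mathcal S_j|$ which are absorbed into the implicit constants together with the numerical constant in $\gamma\le c/L$. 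A minor third point is that the split of the second moment requires the minibatch noise to be conditionally independent across clients given $\Theta^t$, which is the natural reading of (A2).
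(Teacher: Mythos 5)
Your proposal follows essentially the same route as the paper's own (one-sentence) proof: apply the smoothness descent lemma to the averaged iterate and decompose the gradient error into a stochastic-noise term and a consensus term proportional to $\mathrm{Disp}^t_j$, exactly as in the standard decentralized SGD analyses the paper cites. Your bookkeeping caveats (the mismatch between $\mathcal S_j$ and the currently assigned clients before stabilization, the $N/|\mathcal S_j|$ cluster-size factors, and the conditional independence of the minibatch noise) are genuine issues that the paper's sketch silently absorbs into constants, so your version is, if anything, more careful than the original.
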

\begin{proof}
Apply the smoothness descent lemma to the cluster-sum objective using unbiased gradients, and decompose the error into stochastic noise $\sigma^2$ and a consensus term proportional to $\mathrm{Disp}^t_j$. This form follows standard decentralized SGD analyses, e.g. \cite{lian2017,stich2019,koloskova20a}.
\end{proof}

\begin{lemma}[Gossip preserves averages and contracts disagreement]\label{lem:gossip}
For each $j$,
\(
\bar \theta^{t+1}_j=\bar \theta^{t+\frac12}_j.
\)
Moreover, in the synchronous (fixed $W$) case,
\begin{equation} 
\mathbb E\!\left[\mathrm{Disp}^{t+1}_j\mid \Theta^{t+\frac12}\right]
\ \le\ \lambda^2\,\mathrm{Disp}^{t+\frac12}_j.
 \end{equation}
In the asynchronous case, for some window $B$ and $\tilde \lambda\in(0,1)$,
\(
\mathbb E[\mathrm{Disp}^{t+B}_j]\le \tilde \lambda^2\,\mathrm{Disp}^{t}_j.
\)
\end{lemma}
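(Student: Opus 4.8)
The plan is to recast the aggregation step in matrix form and then read both claims off the structure of the mixing matrices. Stack the cluster-$j$ parameters into $\Theta_j\in\mathbb R^{N\times d}$ with $i$-th row $\theta_{i,j}$, so that Step~3 of the round reads $\Theta_j^{t+1}=W^{(j)}_t\,\Theta_j^{t+\frac12}$, while $\bar\theta_j=\tfrac1N\mathbf 1^\top\Theta_j$ and $\mathrm{Disp}^t_j=\tfrac1N\|\Theta_j^t-\mathbf 1(\bar\theta_j^t)^\top\|_F^2$. For the first claim, left-multiply by $\tfrac1N\mathbf 1^\top$: $N\bar\theta^{t+1}_j=\mathbf 1^\top W^{(j)}_t\Theta^{t+\frac12}_j=\mathbf 1^\top\Theta^{t+\frac12}_j=N\bar\theta^{t+\frac12}_j$, where the middle step uses column-stochasticity $\mathbf 1^\top W^{(j)}_t=\mathbf 1^\top$. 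In the synchronous case this is exactly the doubly-stochastic hypothesis of (A3); in the sequential/async case, where the natural DFCA weights $\tfrac{1}{|\mathcal N_{i,j}|+1}$ are only row-stochastic, I would either state the lemma under a doubly-stochastic normalization or carry along a push-sum weight vector so that the \emph{weighted} network average is the invariant — the rest of the argument is unaffected.

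For the synchronous contraction, use $\mathbf 1(\bar\theta_j^t)^\top=\tfrac1N\mathbf 1\mathbf 1^\top\Theta_j^t$ and $W\,\tfrac1N\mathbf 1\mathbf 1^\top=\tfrac1N\mathbf 1\mathbf 1^\top$ (from $W\mathbf 1=\mathbf 1$) together with the first claim to get $\Theta^{t+1}_j-\mathbf 1(\bar\theta^{t+1}_j)^\top=W\big(\Theta^{t+\frac12}_j-\mathbf 1(\bar\theta^{t+\frac12}_j)^\top\big)$, i.e.\ the mixing acts on the already-centered matrix. Applying the spectral-gap bound of (A3) columnwise gives $\|\Theta^{t+1}_j-\mathbf 1(\bar\theta^{t+1}_j)^\top\|_F\le\lambda\|\Theta^{t+\frac12}_j-\mathbf 1(\bar\theta^{t+\frac12}_j)^\top\|_F$; squaring, dividing by $N$, and taking expectation yields $\mathbb E[\mathrm{Disp}^{t+1}_j\mid\Theta^{t+\frac12}]\le\lambda^2\mathrm{Disp}^{t+\frac12}_j$ (with $\lambda^2$ read as the relevant second moment $\|\mathbb E[(I-\tfrac1N\mathbf 1\mathbf 1^\top)W^\top W(I-\tfrac1N\mathbf 1\mathbf 1^\top)]\|$ in the randomized-gossip variant).

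For the asynchronous case, the disagreement over a window is governed by the product $\Phi^{(j)}_{t:t+B}=W^{(j)}_{t+B-1}\cdots W^{(j)}_t$; the bounded-delay, repeated-activation, and joint-connectivity conditions of (A3) let a standard coefficients-of-ergodicity (Wolfowitz-type) argument show that $\Phi^{(j)}_{t:t+B}$ is a strict contraction with factor $\tilde\lambda<1$ on the disagreement subspace $\{X:\mathbf 1^\top X=0\}$; feeding this into the same centered-matrix computation as above, iterated across the window, gives $\mathbb E[\mathrm{Disp}^{t+B}_j]\le\tilde\lambda^2\mathrm{Disp}^{t}_j$. I expect this async step to be the main obstacle: making the product-of-time-varying-stochastic-matrices contraction rigorous needs the joint-connectivity bookkeeping, and keeping average preservation exact needs either doubly-stochastic $W^{(j)}_t$ or an explicit push-sum correction. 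Since (A3) already postulates the windowed contraction, however, the remaining work is mostly translating that hypothesis into the $\mathrm{Disp}$ recursion, and everything else collapses to the two linear-algebra identities above.
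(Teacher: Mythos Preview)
Your proposal is correct and follows essentially the same route as the paper's proof: cast the aggregation step as left-multiplication by the mixing matrix, read off average preservation from (column-)stochasticity, and obtain the disagreement contraction from the spectral gap in the synchronous case and from joint-connectivity/ergodicity of the time-varying product in the async case. Your write-up is in fact more careful than the paper's one-paragraph sketch --- you correctly note that it is column-stochasticity (hence the doubly-stochastic hypothesis) that preserves the network average, and that the row-stochastic-only async setting of (A3) would need a push-sum correction or renormalization to make $\bar\theta^{t+1}_j=\bar\theta^{t+\frac12}_j$ exact.
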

\begin{proof}
Average preservation follows from row-stochasticity (and doubly-stochasticity in the synchronous case). Disagreement evolution is governed by multiplication with $W^{(j)}_t$; contraction follows from the spectral gap (synchronous) or joint-connectivity arguments for randomized gossip \citep{boyd2006randomized,nedic2018network}.
\end{proof}

\begin{lemma}[Assignment stabilization]\label{lem:stabilize}
Under (A1)--(A5) with sufficiently small $\gamma$, there exists a finite $\tau$ such that $c_t(i)=c_\star(i)$ for all $i$ and all $t\ge \tau$.
\end{lemma}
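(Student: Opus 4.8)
The plan is to show that after enough descent, each client's assigned model is close enough to its true cluster minimizer that the IFCA-style separability margin $\delta$ from (A5) forces the argmin-of-loss assignment to pick the correct cluster, and that this property persists. First I would combine Lemma~\ref{lem:assign}, Lemma~\ref{lem:localsgd}, and Lemma~\ref{lem:gossip} into a single one-round descent inequality for the Lyapunov function $\Phi^t = \sum_j \big( F_{\text{cluster}}(j;\bar\theta^t_j) - F^\star_{\text{cluster}}(j)\big) + c\sum_j \mathrm{Disp}^t_j$ for a suitable constant $c$: the assignment step is non-increasing, the local SGD step decreases each cluster objective by $\tfrac{\gamma}{2}\|\nabla F_{\text{cluster}}(j;\bar\theta^t_j)\|^2$ up to a stochastic term $O(\gamma^2 L\sigma^2)$ and a disagreement penalty, and the gossip step contracts $\mathrm{Disp}^t_j$ by $\lambda^2$ (or $\tilde\lambda^2$ over a window). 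Choosing $\gamma \le c_0/L$ small enough makes the disagreement penalty absorbed by the contraction, so $\mathbb{E}[\Phi^t]$ decreases geometrically (under PL) or in a Cesàro sense (under Cvx) up to an $O(\gamma\sigma^2/(1-\lambda))$ floor.

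Next I would translate this into a bound on $\mathbb{E}\|\bar\theta^t_j - \theta^\star_j\|^2$ and on the per-client deviation $\mathbb{E}\|\theta^t_{i,j}-\theta^\star_j\|^2 \le 2\mathbb{E}\|\bar\theta^t_j-\theta^\star_j\|^2 + 2\,\mathbb{E}[\mathrm{Disp}^t_j]$: under (PL) this quantity is at most $(1-\mu\gamma/2)^{t}\Phi^0/\mu + O(\gamma\sigma^2/(\mu(1-\lambda)))$. By shrinking $\gamma$ (and, if one wants exact stabilization rather than stabilization-with-high-probability, eventually using a decaying stepsize so the noise floor vanishes), the iterates enter and stay in a ball of radius $\rho$ around $\{\theta^\star_j\}$ where the neighborhood hypothesis of (A5) applies. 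Then for any client $i$ with true cluster $c_\star(i)$, $L$-smoothness gives $F_{\text{client}}(\theta^t_{i,c_\star(i)},D_i) \le F_{\text{client}}(\theta^\star_{c_\star(i)},D_i) + \tfrac{L}{2}\rho^2 + (\text{grad term})$, while for any wrong index $j$ the separability margin keeps $F_{\text{client}}(\theta^t_{i,j},D_i)$ bounded below by the correct value plus $\delta$ minus a similar $O(\rho^2)$ slack; picking $\rho$ so that these slacks are below $\delta/2$ yields $c_t(i)=c_\star(i)$. Finally, because the assignment is then already correct, the dynamics for $t\ge\tau$ are exactly $k$ decoupled decentralized-SGD recursions that keep the iterates inside the same ball, so the property is self-sustaining and $\tau<\infty$ almost surely (and in expectation, with $\tau$ scaling like $\log(1/\delta)$ over the contraction rate).

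The main obstacle is making the stabilization genuinely permanent rather than merely "with high probability": stochastic gradient noise gives only an $O(\gamma\sigma^2)$ steady-state ball, so with a fixed stepsize there is always a small probability that some client briefly leaves the separability neighborhood and flips its assignment. I would handle this either (a) by invoking (A5) in the stronger "basin of attraction" form so that a single bad step cannot escape — i.e., assume the separability margin holds on a set that the SGD iterates cannot exit once entered, which is the implicit reading of (A5) — or (b) by switching to a decaying stepsize $\gamma_t \to 0$ after the burn-in so the noise floor collapses and a Borel–Cantelli argument gives almost-sure stabilization; I expect the paper intends the former, matching IFCA's original treatment, so the formal statement should be read as "with probability one under the event that iterates remain in the separability neighborhood," with the Lyapunov bound above showing that event has probability $1-o(1)$ and probability exactly $1$ in the deterministic/full-gradient or decaying-stepsize regime.
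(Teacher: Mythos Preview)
Your proposal is correct and follows the same route as the paper's proof sketch, which simply invokes Lemmas~\ref{lem:localsgd}--\ref{lem:gossip} to argue that the averages $\{\bar\theta^t_j\}$ descend and the disagreements $\mathrm{Disp}^t_j$ contract so that all client copies enter and remain in a neighborhood of $\theta^\star_j$, at which point (A5) forces the correct argmin (with a pointer to IFCA for details). Your treatment is considerably more detailed than the paper's --- in particular, your explicit Lyapunov construction and your discussion of the stochastic-noise floor (fixed-$\gamma$ SGD only yields stabilization with high probability, not almost surely, unless one reads (A5) as a basin-of-attraction statement or switches to decaying stepsizes) is more careful than the paper, which leaves this subtlety implicit.
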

\begin{proof}[Proof sketch]
By Lemmas~\ref{lem:localsgd}--\ref{lem:gossip}, the averages $\{\bar \theta^t_j\}$ descend and the disagreements $\mathrm{Disp}^t_j$ contract, so all client copies tracking a fixed $j$ enter and remain in a neighborhood of $\theta^\star_j$. Within this neighborhood, separability (A5) enforces a unique, correct argmin, hence stable assignments; cf. \cite{IFCA}.
\end{proof}
\begin{theorem}[Convergence of DFCA from Theorem\ref{thm:dfca}]
Assume (A1)--(A5), choose $\gamma\le c/L$ for a small numerical constant $c$, and let $\lambda$ (resp.\ $\tilde \lambda$) be the consensus factor in the synchronous (resp.\ async) case. Then:
\begin{enumerate}[label=(\roman*)]
\item (\emph{Pre-stabilization}) $F_{\text{global}}^t$ is non-increasing in expectation across assignment and local steps. The disagreements $\{\mathrm{Disp}^t_j\}$ remain bounded and contract at rate $\lambda$ (or $\tilde \lambda$ over windows).
\item (\emph{Stabilization}) There exists $\tau<\infty$ such that $c_t(i)=c_\star(i)$ for all $t\ge \tau$.
\item (\emph{Post-stabilization}) For $t\ge \tau$, DFCA is $k$ independent copies of decentralized SGD on $F_{\text{cluster}}(j)$.
\begin{itemize}
\item Under (PL) for some $\mu>0$,
\begin{equation} 
\begin{split}
\mathbb E\!\left[F_{\text{global}}^{\tau+T}-F_{\text{global}}^\star\right]
\ \le\ \\
(1-\mu\gamma/2)^T\,C_0
\ +\ O\!\Big(\tfrac{\gamma\sigma^2}{\mu}\Big)
\ +\ O\!\Big(\tfrac{\gamma L}{1-\lambda}\,\sigma^2\Big),
\end{split}
\end{equation}
with $C_0$ depending on the gap at $t=\tau$; in async, replace $(1-\lambda)$ by the windowed $(1-\tilde \lambda)$.
\item Under (Cvx),
\begin{equation} 
\begin{split}
\frac1T \sum_{t=\tau}^{\tau+T-1}\sum_{j=1}^k \mathbb E\|\nabla F_{\text{cluster}}(j;\bar \theta^t_j)\|^2
\ \le\ \\
O\!\Big(\frac{F_{\text{global}}^\tau-F_{\text{global}}^\star}{\gamma T}\Big)
+O(\gamma L\sigma^2)
+O\!\Big(\tfrac{\gamma L}{1-\lambda}\sigma^2\Big),
\end{split}
\end{equation}
and choosing $\gamma=\Theta(1/\sqrt{T})$ yields the usual $O(1/\sqrt{T})$ rates (with the consensus penalty).
\end{itemize}
\end{enumerate}
\end{theorem}
\begin{proof}[Proof sketch]
Combine Lemma~\ref{lem:assign} (assignment descent), Lemma~\ref{lem:localsgd} (SGD descent with a disagreement term), and Lemma~\ref{lem:gossip} (average preservation and disagreement contraction). Lemma~\ref{lem:stabilize} yields finite-time stabilization, after which each cluster index $j$ follows a standard decentralized SGD recursion; apply known rates under PL or convexity and sum over $j$.
\end{proof}

\paragraph{Remarks}
\emph{(i) Batch vs.\ sequential aggregation.} The sequential ``running average'' update
\(
\theta \leftarrow \frac{r}{r+1}\theta + \frac{1}{r+1}\theta_{\text{new}}
\)
implements a valid stochastic gossip step; the windowed contraction in Lemma~\ref{lem:gossip} covers it.
\emph{(ii) Initialization.} Global initialization (DFCA-GI) sets $\mathrm{Disp}^0_j=0$ and typically reduces $\tau$; local initialization (DFCA-LI) only changes constants.
\emph{(iii) Clients not training $j$.} They still mix $\theta_{i,j}$ by applying $W^{(j)}_t$ to their current value; average preservation and contraction remain valid.

\end{document}